\newtheorem{condition}{Condition} 
\theoremstyle{plain}
\newtheorem{theorem}{Theorem}[section]
\newtheorem{lemma}[theorem]{Lemma}
\theoremstyle{definition}
\newtheorem{definition}[theorem]{Definition}
\theoremstyle{remark}
\icmltitlerunning{Constrain Alignment with Sparse Autoencoders}
\begin{document}

\twocolumn[
\icmltitle{Constrain Alignment with Sparse Autoencoders}



\icmlsetsymbol{equal}{*}

\begin{icmlauthorlist}
\icmlauthor{Qingyu Yin}{yyy,sch,equal}
\icmlauthor{Chak Tou Leong}{comp,equal}
\icmlauthor{Hongbo Zhang}{sch}
\icmlauthor{Minjun Zhu}{sch}
\icmlauthor{Hanqi Yan}{k}
\icmlauthor{Qiang Zhang}{yyy} \\
\icmlauthor{Yulan He}{k}
\icmlauthor{Wenjie Li}{comp}
\icmlauthor{Jun Wang}{ucl}
\icmlauthor{Yue Zhang}{sch}
\icmlauthor{Linyi Yang}{sust}
\end{icmlauthorlist}

\icmlaffiliation{yyy}{Zhejiang University}
\icmlaffiliation{comp}{Hongkong Polytechnic University}
\icmlaffiliation{ucl}{University College London}
\icmlaffiliation{sch}{Westlake University}
\icmlaffiliation{k}{King's College London}
\icmlaffiliation{sust}{Southern University of Science and Technology}

\icmlcorrespondingauthor{Linyi Yang and Yue Zhang}{yanglinyiucd@gmail.com, zhangyue@westlake.edu.cn}
\vskip 0.3in
]
\def \eg{\emph{e.g., }}
\def \Eg{\emph{E.g., }}
\def \etc{\emph{etc.}}
\def \ie{\emph{i.e., }}

\def \method{FPO}
\def \ourmethod{FPO}



\printAffiliationsAndNotice{\icmlEqualContribution} 
\begin{abstract}
    The alignment of large language models (LLMs) with human preferences remains a key challenge. While post-training techniques like Reinforcement Learning from Human Feedback (RLHF) and Direct Preference Optimization (DPO) have achieved notable success, they often experience computational inefficiencies and training instability. In this paper, we propose \textbf{F}eature-level constrained \textbf{P}reference \textbf{O}ptimization (FPO), a novel method designed to simplify the alignment process while ensuring stability. FPO leverages pre-trained Sparse Autoencoders (SAEs) and introduces feature-level constraints, allowing for efficient, sparsity-enforced alignment. Our approach enjoys efficiency by using sparse features activated in a well-trained sparse autoencoder and the quality of sequential KL divergence by using the feature-level offline reference. Experimental results on benchmark datasets demonstrate that FPO achieves an above 5\% absolute improvement in win rate with much lower computational cost compared to state-of-the-art baselines, making it a promising solution for efficient and controllable LLM alignments. Code is available at \href{https://github.com/MikaStars39/FeatureAlignment}{FeatureAlignment}.
    
\end{abstract}

\section{Introduction}

Aligning large language models (LLMs) with human values and practical objectives is a critical challenge in AI development \citep{wang2023aligning}. Post-training methods, including fine-tuning \citep{wei2022finetuned, chung2024scaling} and alignment strategies \citep{tunstall2023zephyr}, have played a significant role in refining LLM behavior. Among these, Reinforcement Learning from Human Feedback (RLHF) \citep{christiano2017deep, ouyang2022training} has emerged as a leading technique, integrating human feedback to guide models towards producing valuable and useful outputs. Despite its success, RLHF involves complex mechanisms such as reward modeling and policy gradients, which introduce significant training complexity and computational cost \citep{zheng2023delve, rafailov2023dpo}. To address these limitations, Direct Preference Optimization (DPO) \citep{rafailov2023dpo} has been proposed as a more efficient alternative. Unlike reward-based methods such as Proximal Policy Optimization (PPO) \citep{schulman2017proximal}, DPO directly adjusts the model's output probabilities based on human preferences, reducing training complexity and computational cost. DPO-like approaches can offer a more stable and faster alignment process by bypassing the challenges associated with reward models and policy updates, making it a compelling solution for efficient LLM alignment since DPO uses a reference model to stabilize post-training.

Recent advancements in DPO focus on mainly two directions: efficiency \ie further simplifying the constraints of DPO, and controllability \ie keeping the balance between alignment and generation diversity. In terms of simplicity, methods like SimPO \citep{meng2024simpo} and Odds Ratio Preference Optimization (ORPO) \citep{hong2024reference} eliminate the need for a reference model by using the average log probability of sequences as an implicit normalizer, thereby reducing memory usage and computational demands.  However, DPO’s performance is sensitive to the strength of constraints from the reference policy \citep{liu2024understandingreferencepoliciesdirect}, and these reference-free alignment approaches \citep{hong2024reference,meng2024simpo} can \textit{compromise control}, resulting in unstable training. In terms of controllability, Token-level Direct Preference Optimization (TDPO) \citep{zeng2024token} introduces token-level rewards and sequential Kullback-Leibler (KL) divergence \citep{kullback1951information} to tackle issues related to linguistic coherence, diversity, and stability. However, it comes at the cost of \textit{increased computational complexity}, introducing an additional sequential KL and depending on reference models, complicating the loss computation.

\begin{figure*}[t]
    \centering
    \includegraphics[width=0.9\textwidth]{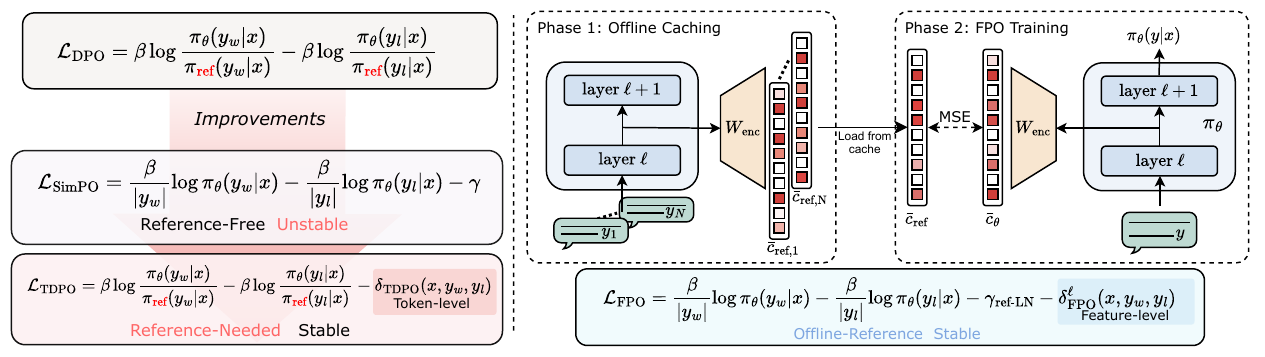}
    \caption{\textbf{Left.} The DPO objective loss function and its two main improvement directions: SimPO and TDPO. SimPO focuses on simplifying the reference model, while TDPO concentrates on controlling the alignment process to enhance generation diversity.
    \textbf{Right.} The pipeline of \method \ consists of sparse autoencoders and the feature-level MSE constraints.  
    }
    \label{fig:fpo_framework}
    \vspace{-0.4cm}
\end{figure*}
A natural hypothesis arises: ``\emph{Is there a method that can strike the right balance between efficiency and controllability?}'' In response, we propose \textbf{\method}, Feature-level Constrained Direct Preference Optimization (See Figure~\ref{fig:fpo_framework}),  introducing an \textit{efficient} and \textit{controllable} method for constraining the model at the \textit{feature level}. 
Here a feature refers to a salient piece of information for the model decision \citep{huben2024sparse}. Intuitively, adjusting the model using feature-level preferences allows fine-grained adjustment that minimizes the side impact, by avoiding the negative influence of spurious features in course-grained control such as token level regularization \citep{zeng2024token}.

\begin{figure*}[t]
    \centering
    \begin{minipage}{0.4\textwidth} 
        \centering
        \includegraphics[width=\textwidth]{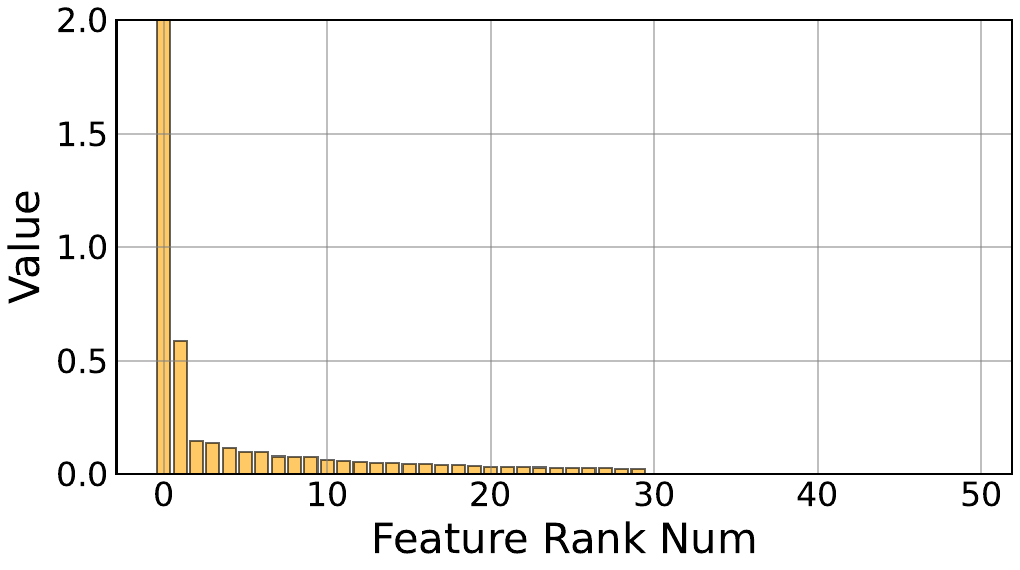}
    \end{minipage}
    \begin{minipage}{0.58\textwidth}
        \centering
        \small
        \begin{tabular}{l|ccc}
            \toprule
            Method & Reference & Efficiency& Constraint\\
            \midrule
            SFT& Free& \textbf{High}& Weak      \\
             DPO    & \textbf{Offline}& \textbf{High}&Weak      \\
            SimPO  & Free      & \textbf{High}         & Weak      \\
            TDPO   & Needed& Low& \textbf{Strong / Dense}\\
            \method (Ours)& \textbf{Offline}& \textbf{High}         & \textbf{Strong / Sparse}\\ 
            \bottomrule
        \end{tabular}%
    \end{minipage}
    \caption{\textbf{Left.} Top-50 SAE feature activation value distribution in Gemma-2-2b. We ranked the activated feature by its activation value. The vertical axis represents the activation values, while the horizontal axis shows the rank of the maximum activation values. This plot illustrates the sparsity of SAE—out of 16,000 features, fewer than 50 have significant activation values. \textbf{Right.} Comparison of existing alignment methods on (1) if they need to load a reference model when training the policy model. (2) Memory consumption. (3) Their ability to control the generation diversity.}
\label{fig:sparse}

\end{figure*}

To achieve that, we derive the FPO objective by contrasting SimPO and DPO, showing the constraint term that SimPO misses. We then add such a term by
introducing the \textbf{feature-level constraints} as an alternative to the costly sequential KL \citep{zeng2024token}. We use Sparse Autoencoders (SAEs) \citep{huben2024sparse}, which generate representations where only a few features are active, enhancing computational efficiency (See Figure~\ref{fig:sparse} Right). Furthermore, regularization in the coefficient space promotes sparsity, stability, and uniqueness in the model’s representations. Since SAEs produce sparse representations, only a few dozen out of 16,000 features are active at any given time \citep{lieberum2024gemma}.
Compared to  SimPO, \method \ is as efficient in memory and time complexity, yet has improved controllability due to feature-level constraints;
compared to constraint-based methods like TDPO, \method \ matches the computational and memory efficiency of methods such as SimPO, and has potentially improved performance as feature-level control can give stronger generalization than token-level control. 
A contrast between FPO, DPO, SimPO and TDPO is shown in \cref{fig:fpo_framework}.

Our experiments demonstrate that \method \ consistently outperforms state-of-the-art methods based on different sizes of backbone LLMs, achieving up to 5\% absolute improvements in win rate (See Table \ref{tab:downstream}) based on AlpacaEval-2 and Arena-Hard benchmarks, up to 0.5 scores on MT-Bench and competitive output diversity. By constraining the shifts of these features during the training process, we can achieve results that meet or even exceed the effectiveness of sequential KL, at a significantly lower computational cost (17.6\% reductions compared to TDPO2 as shown in Figure \ref{fig:temperature} Left). Additionally, we introduce detailed ablation studies to show that our method maintains a stable performance over different temperatures and the selection of SAE layers.

Overall, we show that \method \ enjoys the efficiency of SimPO by using the offline reference control, while also the constraint quality of sequential KL by using the sparse feature-level constraints. To our knowledge, this is the first approach that integrates sparse feature-level constraints into LLM alignment. By incorporating sparse autoencoders with token-level DPO, FPO makes practically meaningful and theoretically solid improvements over existing preference optimization methods along three dimensions: simplicity of implementation, efficiency, and generation diversity.

\vspace{-0.15cm}
\section{Preliminary}
\vspace{-0.15cm}

\paragraph{Direct Preference Optimization (DPO).}
DPO, derived from Reinforcement Learning from Human Feedback (RLHF), provides a direct way to align Language Models (LLMs) with human preferences without explicitly using a reward model.
In practice, an LLM is prompted with a sequence \(x\) (e.g., a question) to generate a corresponding sequence \(y\) (e.g., an answer), where both \(x\) and \(y\) consist of tokens.
DPO maps the reward function \( r(x, y) \) to the optimal policy by minimizing the reverse KL divergence from a reference model. This results in the following equation for the reward:
\begin{equation}
    r(x, y) = \beta \log \frac{\pi_{\theta}(y | x)}{\pi_{\text{ref}}(y | x)} + \beta \log Z(x),
\end{equation}
where \(\pi_\theta(\cdot|x) \) and \( \pi_{\text{ref}}(\cdot|x) \) are policy (i.e, the LLM for post-training) and reference (i.e., the base LLM) models, respectively. \( \beta \) is the coefficient that governs the strength of the KL divergence penalty, \( Z(x) \) is the partition function. To align with human preferences, DPO uses the Bradley-Terry (BT) model for pairwise comparisons. By incorporating the reward function into the BT model and using the negative log-likelihood, DPO computes the loss:
\begin{equation*}
\label{eq:dpo}
\begin{split}
\mathcal{L}_{\text{DPO}}(\pi_{\theta}; \pi_{\text{ref}}) = -\mathbb{E}_{(x, y_w, y_l) \sim \mathcal{D}} \left[ \log \sigma \left( u(x, y_w, y_l) \right) \right], \\
u(x, y_w, y_l) = \beta (\log \frac{\pi_{\theta}(y_w | x)}{\pi_{\text{ref}}(y_w | x)} - \log \frac{\pi_{\theta}(y_l | x)}{\pi_{\text{ref}}(y_l | x)}).
\end{split}
\end{equation*}
Here, \( \mathcal{D} \) represents the dataset with human preference pairs. \( y_w \) and \( y_l \) are the preferred and less preferred completions, respectively. DPO provides a direct way to align LLMs with human preferences without the explicit use of a reward model, leveraging preference comparisons.



\paragraph{Simple Preference Optimization (SimPO).}
SimPO simplifies DPO by removing the need for a reference model and aligning rewards directly with the length-normalized log-likelihood of the policy model's output. The SimPO loss function can be formulated as:
\begin{equation*}
\label{eq:simpo}
\begin{split}
    \mathcal{L}_{\text{SimPO}}(\pi_{\theta}) &= -\mathbb{E}_{(x, y_w, y_l) \sim \mathcal{D}} \left[ \log \sigma \left( u(x, y_w, y_l) \right) \right], \\
    u(x, y_w, y_l) &= \frac{\beta}{|y_w|} \log \pi_{\theta}(y_w | x) - \frac{\beta}{|y_l|} \log \pi_{\theta}(y_l | x) - \gamma.
\end{split}
\end{equation*}
where \( \gamma \) is a positive margin ensuring the reward for the preferred response exceeds that of the less preferred one by at least \( \gamma \). However, while SimPO is computationally efficient, the lack of reference control \citep{roy2021direct} results in instability, as the reference model can stabilize training and improving performance~\cite{liu2024understandingreferencepoliciesdirect}. 

\paragraph{Token-Level Direct Preference Optimization (TDPO).}
Token-Level Direct Preference Optimization (TDPO) refines the DPO framework by operating at the token level, accounting for the sequential nature of text generation. 
The first version of TDPO loss function is given by:
\begin{equation}
\small
\begin{split}
    \mathcal{L}_{\text{TDPO}_1}(\pi_{\theta}; \pi_{\text{ref}}) &= -\mathbb{E}_{(x, y_w, y_l) \sim \mathcal{D}} \left[ \log \sigma \left( u(x, y_w, y_l) \right) \right], \\
    u(x, y_w, y_l) &= \\
    \beta \log \frac{\pi_{\theta}(y_w | x)}{\pi_{\text{ref}}(y_w | x)} & - \beta \log \frac{\pi_{\theta}(y_l | x)}{\pi_{\text{ref}}(y_l | x)} - \delta_{\text{TDPO}_1}(x, y_w, y_l)
\end{split}
\label{eq:tdpo1}
\end{equation}
where \( \delta_{\text{TDPO}_1}(x, y_w, y_l) \) is the KL divergence difference between the preferred and less preferred completions:
\begin{equation}
\small
\begin{split}
\delta_{\text{TDPO}_1}(x, &y_w, y_l) = \\ 
\beta (D_{\text{TDPO}_1} &\left( x, y_l; \pi_{\text{ref}} \| \pi_{\theta} \right) - D_{\text{TDPO}_1} \left( x, y_w; \pi_{\text{ref}} \| \pi_{\theta} \right),
\end{split}
\label{eq:tdpo1_kl}
\end{equation}
and the sequential KL divergence between policy and reference output with sequence length $T$ is defined as
\begin{equation*}
\small
\begin{split}
D_{\text{TDPO}}({x},{y};\pi_\mathrm{ref}\|\pi_\theta) =
        \sum\limits_{t=1}^TD_{\mathrm{KL}}&(\pi_\mathrm{ref}(\cdot|[{x}, y^{<t}])\|\pi_\theta(\cdot|[{x}, y^{<t}]))
\end{split}
\end{equation*}
To further stabilize the gradient within the optimization, an improved loss function \(\mathcal{L}_{\text{TDPO}_2}\) is given by replacing the regularization \(\delta_{\text{TDPO}_1}\) with:
\begin{equation}
\small
\label{eq:tdpo2_kl}
    \begin{split}
        \delta_{\text{TDPO}_2}\left(x, y_w, y_l\right) = 
        \alpha & \left( \beta D_{\text{TDPO}} \left( x, y_l; \pi_{\text{ref}} \| \pi_{\theta} \right) \right. \\ 
        & \left. - \mathrm{sg}\left(\beta D_{\text{TDPO}} \left( x, y_w; \pi_{\text{ref}} \| \pi_{\theta} \right)\right) \right),
    \end{split}
\end{equation}
where $\alpha$ is an additional hyperparameter to balance between alignment and regularization, \( \beta \) is the coefficient that governs the strength of the KL divergence, and $\mathrm{sg}$ denotes the stop-gradient operator.
Unlike DPO, TDPO introduces token-level \textit{forward KL divergence}, allowing for finer control over model alignment and diversity in generation, also introducing additional computational overhead. 


\paragraph{Sparse Autoencoders (SAE).}
SAEs provide a method for recovering monosemantic, interpretable features, enhancing the steerability of language models, where individual neurons activate in semantically diverse contexts. SAEs aim to reconstruct internal representations with sparsely activated features, disentangling the representations into interpretable components. Given the latent representation of a model \(h \in \mathbb{R}^{d}\), its sparse activation \( c \in \mathbb{R}^{m} \) is computed as:
\begin{equation}
\label{eq:sae}
    c = \operatorname{ReLU}(W_{\text{enc}}h + b), \quad \hat{h} = W_{\text{dec}}^T c,
\end{equation}
where \( W_{\text{enc}}\in \mathbb{R}^{m\times d} \) and \( W_{\text{dec}}\in \mathbb{R}^{m\times d} \) are the learned weight matrices, \( b\in \mathbb{R}^{m} \) is the bias vector, \(m\) is the number of latent features with \(m\gg d\), and \( \hat{h} \) is the reconstructed input, computing loss:
\begin{equation}
\label{eq:sae_loss}
    \mathcal{L}_\text{SAE}(h) = \|h - \hat{h}\|^2 + \alpha \|c\|_1,
\end{equation}
where \( \alpha \) controls the sparsity of the hidden representation. The \( \ell_1 \)-norm on \( c \) enforces sparsity, ensuring only a small number of features are active at any given time (See Figure 2 Left for visualization of SAE's sparsity). 

\section{Feature-Level Direct Preference Optimization}
In the right table of Figure~\ref{fig:sparse}, we present a comparison of \method \ with other methods from three perspectives: reference model usage, efficiency, and constraint control, which is distinguished from existing methods in the following aspects:

\begin{itemize}
    \item \textbf{Reference-free methods} such as SimPO and ORPO are memory and computation efficient. However, they struggle with instability brought by the lack of reference constraints.
    \item \textbf{Alignment methods with KL control on output logits}, like TDPO and KTO \citep{ethayarajh2024kto}\footnote{The loss function of KTO is similar to that of TDPO in terms of its use of KL divergence.}, are powerful yet controllable, but their sequential KL based on output probabilities makes them costly.
    \item \textbf{Interpretability methods} such as SAE are widely used for interpreting the inner representations of LLMs due to their sparse and monosemantic activations \cite{chen2017strong,huben2024sparse}. However, this feature has not yet been applied in areas outside of interpretability.
\end{itemize}
\vspace{-0.3cm}

\begin{table*}[t]
\caption{Specific implementations of \textit{Log Probability Difference} (LPD), \textit{Margin}, and \textit{Constraint} in \cref{eq:unified_po} for DPO, its variants SimPO and TDPO, and the proposed \method.}
\label{tab:unfied_po}
\centering
\footnotesize
\begin{tabular}{lllclclclc}
\toprule 
 &\textbf{Method}  && \textbf{LPD}  && \textbf{Margin}  && \textbf{Constraint}  & &\textbf{Constraint Type}\\ 
\midrule
 &DPO  && \(\beta\log \pi_\theta(y_w|x) -\beta\log \pi_\theta(y_l|x)\)  && \(\gamma_{\text{ref}}\)  && 0  & &-\\ 
 &SimPO  && \(\frac{\beta}{|y_w|} \log \pi_\theta(y_w|x) - \frac{\beta}{|y_l|} \log \pi_\theta(y_l|x)\)  && \(\gamma\) (a constant)  && \(0\)  & &-\\ 
 &TDPO\(_i\)  && \(\beta\log \pi_\theta(y_w|x) - \beta\log \pi_\theta(y_l|x)\)  && \(\gamma_{\text{ref}}\)  && \(\delta_{{\text{TDPO}_i}}(x, y_w, y_l)\)  & &KL Divergence\\ \midrule
 &\textbf{\method}  && \(\frac{\beta}{|y_w|}\log \pi_\theta(y_w|x) - \frac{\beta}{|y_l|}\log \pi_\theta(y_l|x))\)  && \(\gamma_\text{ref-LN}\)  && \(\delta_\text{FPO}(x, y_w, y_l)\)  & &MSE \\
\bottomrule
\end{tabular}
\end{table*}




\paragraph{DPO with Reference-base Target Margin.} 

To begin, we examine the loss functions of DPO and its enhanced variants, specifically SimPO and TDPO.
By comparing \cref{eq:dpo} and \cref{eq:tdpo1}, we notice that TDPO and DPO share an identical implicit \textit{reward difference} term: \(\beta \log \frac{\pi_{\theta}(y_w | x)}{\pi_{\text{ref}}(y_w | x)} - \beta \log \frac{\pi_{\theta}(y_l | x)}{\pi_{\text{ref}}(y_l | x)}\). Essentially, TDPO can be viewed as an extension of DPO, where a KL constraint \(\delta(x, y_w, y_l)\) is incorporated into the sigmoid function \(\sigma(\cdot)\) in addition to the implicit \textit{reward difference}. Taking a step further, we can isolate \(\pi_\text{ref}\) from each implicit reward term:
\begin{equation}
\small
\begin{aligned}
\label{eq:ref_iso}
    \beta \log \frac{\pi_{\theta}(y_w | x)}{\pi_{\text{ref}}(y_w | x)} - \beta \log \frac{\pi_{\theta}(y_l | x)}{\pi_{\text{ref}}(y_l | x)} &=
    \\
    \beta \log \pi_{\theta}(y_w | x) - \beta \log \pi_{\theta}(y_l | x)\\
    \quad - \underbrace{\beta \left( \log \pi_{\text{ref}}(y_w | x) - \log \pi_{\text{ref}}(y_l | x) \right)}_{\coloneq \gamma_\text{ref}}.
\end{aligned}
\end{equation}
We can see that \cref{eq:ref_iso} shares a similar form with the \textit{reward difference} calculation of SimPO in \cref{eq:simpo}. This similarity reveals that the \textit{reward difference} in DPO can be interpreted as a combination of log probability difference with an adaptive margin \(\gamma_\text{ref}\) from the reference model, whereas SimPO calculates the average log probability difference with a fixed margin. Based on the above observation, we can reframe the loss function of DPO and its two variants into a unified form:
\begin{equation}
\small
\label{eq:unified_po}
\begin{split}
\mathcal{L}_{\text{\method}}(\pi_{\theta}; \pi_{\text{ref}}) &= -\mathbb{E}_{(x, y_w, y_l) \sim \mathcal{D}} \Big[ \log \sigma \Big( u(x, y_w, y_l) \Big) \Big], \\
u(x, y_w, y_l) &= \frac{\beta}{|y_w|}\log \pi_\theta(y_w|x) - \frac{\beta}{|y_l|}\log \pi_\theta(y_l|x) \\
&- \gamma_\text{ref-LN} - \delta^\ell_\text{FPO}(x, y_w, y_l).
\end{split}
\end{equation}
We summarize the specific implementations for DPO, SimPO and TDPO in the form of \cref{eq:unified_po} in \cref{tab:unfied_po}.
SimPO eliminates the reference model from the alignment training by using a fixed margin and omitting constraints, which reduces memory and computational costs.
However, it has been criticized that completely removing reference models leads to instability \citep{liu2024understandingreferencepoliciesdirect}. Our approach begins by applying the length normalization technique of SimPO to the original implicit \textit{reward difference} of DPO:
\begin{equation}
\small
\begin{aligned}
\label{eq:dpo_ln}
&\frac{\beta}{|y_w|}\log \frac{\pi_{\theta}(y_w | x)}{\pi_{\text{ref}}(y_w | x)} - \frac{\beta}{|y_l|}\log \frac{\pi_{\theta}(y_l | x)}{\pi_{\text{ref}}(y_l | x)} \\
&\quad - \left( \frac{\beta}{|y_w|} \log \pi_{\text{ref}}(y_w | x) - \frac{\beta}{|y_l|}\log \pi_{\text{ref}}(y_l | x) \right) \\
&= \frac{\beta}{|y_w|}\log \pi_{\theta}(y_w | x) - \frac{\beta}{|y_l|}\log \pi_{\theta}(y_l | x) - \gamma_\text{ref-LN}.
\end{aligned}
\end{equation}
\cref{eq:dpo_ln} suggests using average log probability difference as the Log Probability Difference (LPD) term and introducing an adaptive margin with length normalization as the \textit{Margin}. The length-normalized margin \(\gamma_\text{ref-LN}\) enhances the stability by using a reference model to calculate an adaptive margin for each preference pair. We consider an offline caching technique to minimize the computational overhead introduced by the reference model.
 
\paragraph{Feature-level Constraints.}
Currently, the use of constraints \(\delta(x, y_w, y_l)\) in alignment processes typically follows KL divergence-based approach shown in \cref{eq:tdpo1_kl} and \ref{eq:tdpo2_kl}.
However, this method has a significant issue: for most LLMs, which generally have a very large vocabulary, where we assume the vocabulary size is $V$. 
For each batch with an input length of \(T\), the resulting output probabilities have a size of $V \times T$.
This work adopts Gemma~\citep{lieberum2024gemma}, an advanced open-sourced LLM series, which has a massive vocabulary size of 265K. For an input length of 1024, this results in a probabilities matrix containing approximately 262M elements, which is nearly 1/10 the size of its 2B version model.
Therefore, computing the KL divergence incurs considerable computational overhead to DPO-enhancing methods such as TDPO.

LLMs generate these sizable output probabilities by projecting their internal representations onto vocabulary space. In contrast to this, SAE is found to be capable of projecting these representations onto a sparse  feature space. 
Motivated by the efficient nature of sparsity, we leverage the sparse feature activations from SAE to approximate the function of KL divergence.
Specifically, for the output representation \(h^{(t,\ell)}\) from layer \(\ell\) of the model at position \(t\), we can obtain its sparse activation \(c^{(t,\ell)}\) using an SAE as described in \cref{eq:sae}.
Since KL divergence measures the difference between two probability distributions, we employ MSE as the loss to measure the discrepancy between the sparse activation from the two models. To further improve efficiency, instead of calculating the sum of token-wise discrepancy like TDPO, we first perform average pooling for the sparse activation across tokens and then calculate the MSE between pooled sparse activations, which gives us a more efficient sequential discrepancy: 
\begin{equation}
\small
\begin{aligned}
\label{eq:fea_seq_d}
    D^\ell_{\text{FPO}} \left( x, y; \pi_{\text{ref}} \| \pi_{\theta} \right)
     = \frac{1}{k}\sum_{i \in I_k} (\bar{c}^{\ell}_{\theta,i} - \bar{c}^{\ell}_{\text{ref},i})^2,
\end{aligned}
 \end{equation}
where pooled sparse activation \(\bar{c}^{\ell}=\sum_{t=1}^{T}c^{t,\ell}\), $I_k = \mathrm{top}_k(\mathrm{indices}(c^{(t,\ell)}_\theta)) \cup \mathrm{top}_k(\mathrm{indices}(c^{(t,\ell)}_{\text{ref}}))$, and $\mathrm{top}_k(\cdot)$ returns the indices of the $k$ largest elements. We focus on measuring the MSE between the largest activations to capture the discrepancy in dominant features, as these are likely to be the most influential. 
Echoing the strategy of TDPO, we replace \(D_{\text{TDPO}}\) in \(\delta_{\text{TDPO}_2}(x, y_w, y_l)\) with \(D^\ell_{\text{FPO}}\) as a plug-and-play efficient approximation. This results in a feature-level constraint \(\delta^\ell_{\text{FPO}}(x, y_w, y_l)\).

\paragraph{Building Offline Reference Margin and Constraint.} 
We have justified the implementation of the key components in \cref{eq:ref_iso}, which is a SimPO-like reward difference with a reference-based adaptive margin and a feature-level constraint.
At first glance, the reference model appears to be deeply involved in both the calculation of the margin and the constraint, making its complete elimination challenging. 

Therefore, instead of directly removing the reference model, we propose a more appropriate approach: separating the computation of the reference model from the training process by computing its output offline. Offline computation means pre-calculating and caching the results related to the reference model needed for training and then reading them during the training loop.
This approach allows us to free up the reference model during alignment with only a small and acceptable I/O demand. 

To explore an implementation for \cref{eq:unified_po} that enjoys the advantages of SimPO, such as length normalization, while ensuring stability, first, we pre-compute and store the margin \(\gamma_{\text{ref-LN}}\) using the length normalization for each preference pair. Since it is scalar, it only occupies \(O({N})\) space to store it, where \({N}\) is the number of preference pairs. Next, for the feature-level constraint, we pre-compute and store the sparse activation of each sample in the training dataset following the computation in \cref{eq:fea_seq_d}. Consequently, we only need to pre-compute and store one sparse activation \(\bar{c}^{\ell}_{\text{ref}}\) for each sample, which requires \(O(2 \cdot {N} \cdot k)\) space. This results in a significantly smaller space requirement compared to constraints used in TDPO, where the vocabulary size is \(V\), for each batch with \({N}\) preference pairs, requiring a much larger space of \(O(2 {N} \cdot V)\). By combining all the above results, we arrive at the loss function for FPO:
\begin{equation}
\label{eq:fpo}
\small
\begin{split}
    \mathcal{L}_{\text{\method}}(\pi_{\theta}; \pi_{\text{ref}}) &= -\mathbb{E}_{(x, y_w, y_l) \sim \mathcal{D}} \Big[ \log \sigma \Big( u(x, y_w, y_l) \Big) \Big], \\
    u(x, y_w, y_l) &= \frac{\beta}{|y_w|}\log \pi_\theta(y_w|x) - \frac{\beta}{|y_l|}\log \pi_\theta(y_l|x) \\
    &- \gamma_\text{ref-LN} - \delta^\ell_\text{FPO}(x, y_w, y_l).
\end{split}
\end{equation}




 \begin{table*}[t]
    \caption{
        \textbf{Left}: Performance comparison of different methods for Gemma-2-2B and Gemma-2-9B across various benchmarks (AlpacaEval-2, Arena-Hard, and MT-Bench), compared to Supervised Fine-Tuning (SFT), DPO and variants. Length controlled Winning Rate: WR-L; Winning Rate: WR.  
        \textbf{Right}: Comparison of FPO and other baseline methods in terms of the trade-off between Alignment Acc(accuracy) and Diversity  $H$ \ie Diversity (Entropy) on the UltraFeedback dataset.  
    }
    \centering
    \small
    \begin{minipage}{0.7\textwidth}
        \centering
        \small
        \setlength{\tabcolsep}{2pt}
        \begin{tabular}{lcccccccc}
        \toprule
         & \multicolumn{4}{c}{\textbf{Gemma-2-2B}}& \multicolumn{4}{c}{\textbf{Gemma-2-9B}}\\
            \midrule
                 \textbf{Method}&\multicolumn{2}{c}{\textbf{AlpacaEval-2}}&\textbf{Arena-Hard}&  \textbf{MT-Bench}&  \multicolumn{2}{c}{\textbf{AlpacaEval-2}}&\textbf{Arena-Hard}&\textbf{MT-Bench}\\
          FPO v.s.& \scriptsize \textbf{WR-L(\%)}& \scriptsize \textbf{WR (\%)}& \scriptsize \textbf{WR (\%)}& \scriptsize \textbf{$\Delta$ Score}&  \scriptsize \textbf{WR-L (\%)}& \scriptsize \textbf{WR (\%)}& \scriptsize \textbf{WR (\%)}&\scriptsize \textbf{$\Delta$ Score}\\
            \midrule
                SFT& 54.7&  55.1&  53.2& +0.5&  51.2& 52.4& 53.4&+0.3\\
                 DPO& 51.7& 50.8& 51.6& +0.1&  51.0& 51.0& 51.2&+0.1\\
                 TDPO-1&51.5& 54.4& 51.4& +0.3&  50.8& 50.2& 51.8&+0.1\\
                 TDPO-2&50.9& 54.0& 50.6& +0.2&  50.2& 49.9& 49.5&0.0\\
                 SimPO& 51.1& 52.2& 51.4& +0.4&  50.2& 51.8& 51.0&+0.2\\
            \bottomrule
        \end{tabular}
        \label{tab:downstream}
    \end{minipage}
    \hfill
    \begin{minipage}{0.27\textwidth}
        \centering
        \small
        \begin{tabular}{lcc}
        \toprule
         \textbf{Method}& \textbf{Acc(\%)} $\uparrow$& \textbf{$H$} $\uparrow$\\
            \midrule
                DPO&59.9&1.66\\
                 TDPO-1&63.2&1.65\\
                 TDPO-2&\textbf{64.2}&\textbf{1.68}\\
         SimPO&63.4&1.64\\
                 FPO&\underline{\textit{64.1}}&\textbf{1.68}\\
            \bottomrule
        \end{tabular}
        \label{tab:comparison}
    \end{minipage}
\end{table*}

\section{Experimental Setup}

\paragraph{Model and Training Settings.} Our model selection is guided by two key principles: scalability and transparency. For scalability, we first select a series of models spanning different parameter sizes, including Gemma-2-2B and Gemma-2-9B~\citep{gemmateam2024gemma2improvingopen}\footnote{We select Gemma-scope as it provides pre-trained SAEs \citep{lieberum2024gemma} for all layers.}. This ensures that we can evaluate our approach's performance as the model parameters scale and assess its robustness across diverse model architectures. 

For transparency, we exclusively select foundational models, which have not undergone supervised fine-tuning (SFT) or alignment processes. We begin by fine-tuning these models using a unified conversational format provided by the Halos dataset, applying it to the Ultrachat-200K~\citep{ding2023enhancing}. Dataset for initial instruction tuning. This establishes a baseline conversational capability and ensures that all our methods are compared on a consistent SFT model. Subsequently, we employ the UltraFeedback~\citep{cui2024ultrafeedback}. Dataset to align the SFT models using various methods. This approach maintains transparency and control throughout the process, as all data and methods are open-sourced across the experimental setup.

For the hyperparameters related to alignment methods, such as $\alpha$ and $\beta$, we initially refer to the hyperparameter settings from the corresponding papers. If these settings are explicitly provided, we directly adopt their configurations. For configurations that are not given, we perform a hyperparameter search to determine the optimal values. Regarding the training hyperparameters, we standardize the batch size to 32, set the learning rate to $5 \times 10^{-7}$, and use a warm-up period of 150 steps, after which the learning rate remains constant, set the epoch as $1$. We employ the Adam~\citep{kingma2014adam} and RMSProp optimizers~\citep{graves2013generating} for Gemma-2-2B and Gemma-2-9B, respectively. 


 \begin{figure*}[t]
    \centering
    \includegraphics[width=0.22\linewidth]{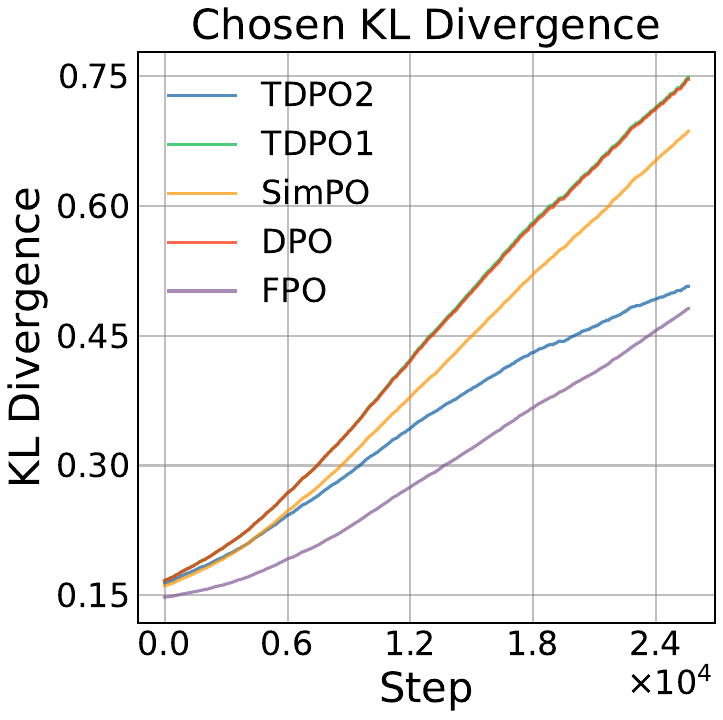}
    \includegraphics[width=0.22\linewidth]{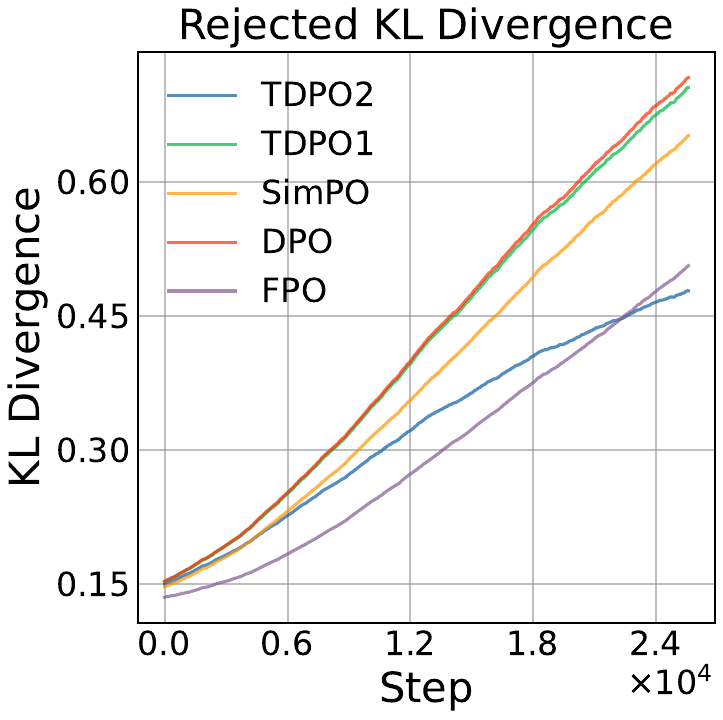}
    \includegraphics[width=0.22\linewidth]{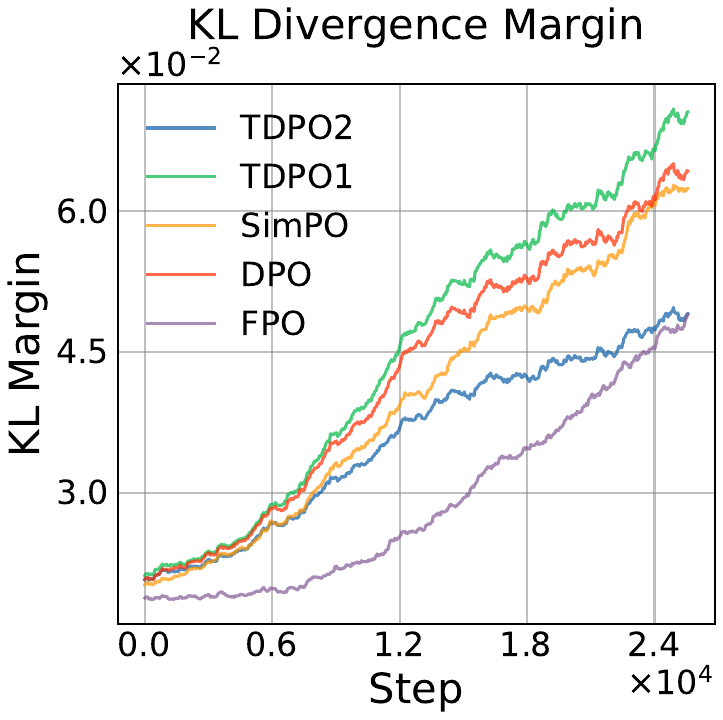}
    \includegraphics[width=0.22\linewidth]{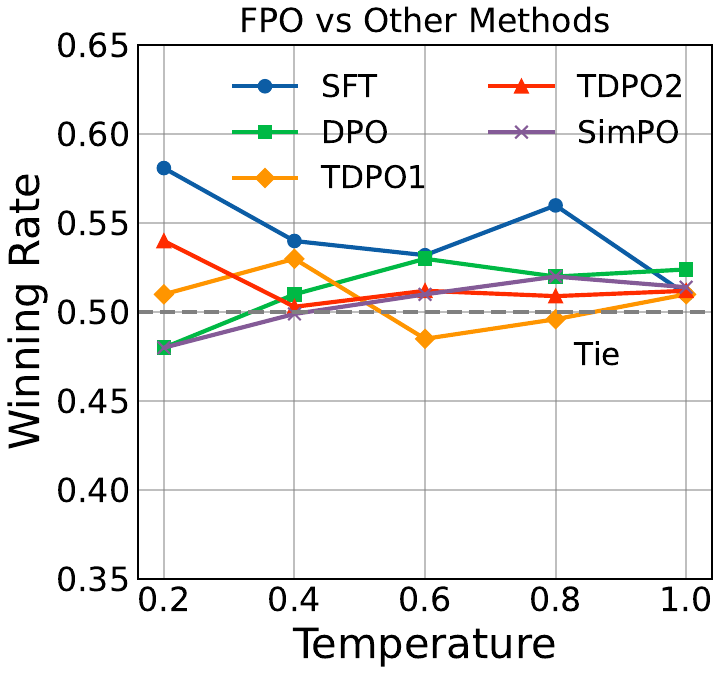}
    \caption{\textbf{Left 1.} KL Divergence on the preferred responses (chosen). \textbf{Left 2.} KL Divergence on the dispreferred responses (rejected).  \textbf{Right 1.} KL Divergence margin \ie $|\beta D_{\text{SeqKL}} \left( x, y_l; \pi_{\text{ref}} \| \pi_{\theta} \right) - \beta D_{\text{SeqKL}} \left( x, y_w; \pi_{\text{ref}} \| \pi_{\theta} \right)|$. \textbf{Right 2.} Win rates of FPO v.s. other methods above the improvements based on Gemma-2-2B on different sampling temperatures.}
    \label{fig:kl}
\end{figure*}

\paragraph{Baseline Methods.} Regarding our baseline comparison methods, we primarily compare three categories of approaches. The first category consists of our foundational methods, including instruction fine-tuning (SFT) and DPO itself. Here, SFT refers to the model's performance after the first-stage fine-tuning, while DPO refers to the direct application of DPO for further alignment following SFT. The second category includes methods with explicit KL control and efficient reference-free methods. We select the TDPO series \ie TDPO-1, TDPO-2 and SimPO, as they currently represent the state-of-the-art in these two classes of methods (DPO-enhancing and DPO-simplified), respectively.

\paragraph{Evaluation Benchmarks.} 

We evaluate our models on three widely-used open-ended instruction-following benchmarks: MT-Bench~\citep{zheng2023judging}, AlpacaEval 2~\citep{alpaca_eval,dubois2024length}, and Arena-Hard~\citep{arenahard2024,chiang2024chatbot}. These benchmarks are designed to test the models' conversational abilities across a broad spectrum of tasks and have gained significant adoption in the research community. AlpacaEval 2 includes 805 questions derived from five different datasets, while MT-Bench spans eight categories with a total of 80 questions. Arena-Hard, the most recent release, builds on MT-Bench by introducing 500 complex technical problem-solving queries.

We follow the standard protocols for each benchmark in evaluations, by computing the $\Delta\textbf{Score}$ as the margin between \method \ and other methods. The metrics evaluated include Length Controlled Winning Rate (WR-L) and Winning Rate (WR) for AlpacaEval-2 and Arena-Hard, and a score from 1-10 for MT-Bench. For all methods,  we use GPT-4 -Turbo~\citep{achiam2023gpt} as the evaluator. 

For analyzing the alignment and diversity trade-off of our method, following \citet{zeng2024token}, in experiments, we validate and compare \method \ against several strong alignment baselines, including DPO \citep{rafailov2023dpo}, SimPO \citep{meng2024simpo}, TDPO1, and TDPO2 \cite{zeng2024token}.


\section{Results and Discussions}
 
\paragraph{FPO Consistently Outperforms Strong Baselines on Three Benchmarks.}
We evaluate the performance differences between FPO and other methods across three key aspects: training accuracy, generation diversity, and performance on downstream tasks. In terms of downstream tasks, we assess the model's performance including the winning rate or score on the AlpacaEval2 Benchmark, Arena Hard, and MT Bench. As shown in Table \ref{tab:downstream}, FPO achieves highly competitive results, with up to a 5.08\% improvement in winning rate compared to other methods when testing on Gemma-2-2B. Additionally, based on Gemma-2-9B, we observe a consistent improvement in our method compared to baselines. However, the performance improvements on the 9B model introduced by FPO are limited compared to the 2B model. We argue that this is because, with the same width of the SAE, smaller models, due to their lower complexity, achieve a more thorough decomposition of features, filtering more noisy features, and leading to more accurate constraints.
\setlength{\tabcolsep}{4pt}

\subsection{The Trade-off between Controllability and Efficiency.}


\paragraph{Accuracy vs. Diversity.} 

We measure the training accuracy on the UltraFeedback dataset, which is defined as the probability that the chosen answer's token-wise probabilities exceed those of the rejected answer. Table \ref{tab:comparison} shows the model's generation diversity by measuring the entropy of the top 100 results on AlpacaEval2, where the $\uparrow$ indicates higher values are preferable. We use \textbf{bold} to show the best-performing result across all metrics, and \underline{underline} to denote the second-best result. 
The results indicate that FPO achieved the second-highest training accuracy, only behind TDPO2, outperforms other baselines, and has the highest diversity. We also demonstrate that FPO exhibits entropy levels comparable to methods like TDPO-2, which excel in controlling output diversity, indicating the effectiveness of FPO. 

\begin{table*}[t]
 \caption{Ablation Study on SAE layer selection,  hyperparameters $\alpha$ and stop-gradient operator (Grad. sg. for short). We perform experiments on Gemma-2-2b, with the 25th layer's residual SAE used to evaluate the effects of varying \(\alpha\) and applying a stop-gradient. We search for the best settings considering the trade-off between Alignment (accuracy) and Diversity (entropy).
}
    \footnotesize
    \centering
    \setlength{\tabcolsep}{2pt}
    \begin{tabular}{clcccccccccllllllll}
    \toprule
      \multicolumn{11}{c}{\textbf{Search Strategy: Layer Selection}} & \multicolumn{8}{c}{\textbf{Search Strategy: $\alpha$ Selection / Stop-Gradient}}\\
        \midrule
               &  \textbf{layer \(\ell\)}& 7&  7&13& 13& 19 &19 &  \textbf{25}& 25&  $\boldsymbol{\alpha}$& 0.1& 0.5& 1& 2& 0.1& \textbf{0.5}& 1& 2\\
               & \textbf{SAE type}&  Res&  MLP&Res& MLP& Res&MLP&  \textbf{Res}& MLP&  \textbf{Grad sg.}& -& -& -& -& Yes& \textbf{Yes}& Yes& Yes\\
              \midrule
             & \textbf{Acc (\%)} $\uparrow$&  57.2&  57.4&59.1& 61.3& 59.7&62.4&  63.6& 63.4&  & 64.1& 63.7& 63.4& 61.9& 64.0& 63.6& 62.7& 62.1\\
             &   $H\uparrow$&  1.645&  1.609&1.612& 1.637& 1.644&1.654&  1.680& 1.671&  & 1.630& 1.642& 1.666& 1.643& 1.652& 1.680& 1.682& 1.679\\
             \bottomrule
    \end{tabular}
    \label{tab:ablation}
 \end{table*}

\paragraph{FPO Yields Better Controllability and Efficiency Trade-off.} %
Using Gemma-2-2B as the base model, we first conduct dialogue fine-tuning and proceed with the testing phase. For the calculation of KL divergence, we consistently apply TDPO’s sequential KL divergence method. Specifically, we compute the KL divergence of the policy model relative to the reference model for both the preferred response (\ie chosen) and the dispreferred response (\ie rejected). The results (See Table~\ref{tab:comparison}) indicate that, due to FPO's excellent KL control and well-designed reward structure, it achieves performance comparable to other methods while maintaining lower computational costs.


\paragraph{Hardware Efficiency of FPO.}
Given the efficiency of FPO compared to TDPO2, as shown in the left one in Figure ~\ref{fig:temperature}, we consider this result to be highly competitive. The efficiency of \method is reflected primarily in two aspects: (1) Offline Processing. \method \ does not require an additional reference model to be loaded during training, but only incurs minimal I/O overhead to read pre-stored information at each step, specifically the one-dimensional tensors needed for training. This process can be efficiently handled by the dataloader. (2) Sparsity. Due to the sparse activation, we only need to process the activated values, reducing computational overhead. To validate its efficiency, we tested the memory consumption of different methods during training. In terms of memory usage, \method \ maintains nearly the same level of memory consumption as reference-free methods like SimPO. Compared to methods that introduce more computation, such as TDPO, \method \ achieves approximately a 17\% memory optimization.

It is important to note that, compared to reference-free methods like SimPO, FPO still requires pre-computation of the reference model’s log probabilities and SAE feature activations. However, this reduces the peak computational and memory demands, making the model easier to run on smaller devices with lower costs. Considering that scaling up computational resources is generally more challenging than extending runtime, we believe this represents a reasonable trade-off between performance and cost.

 \begin{figure}[t]
    \centering
    \includegraphics[width=0.44\linewidth]{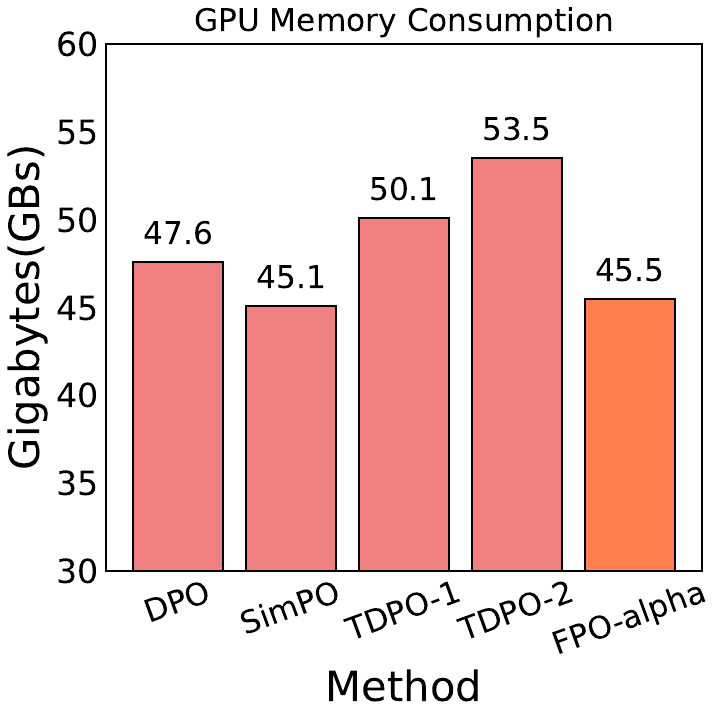}
    \includegraphics[width=0.44\linewidth]{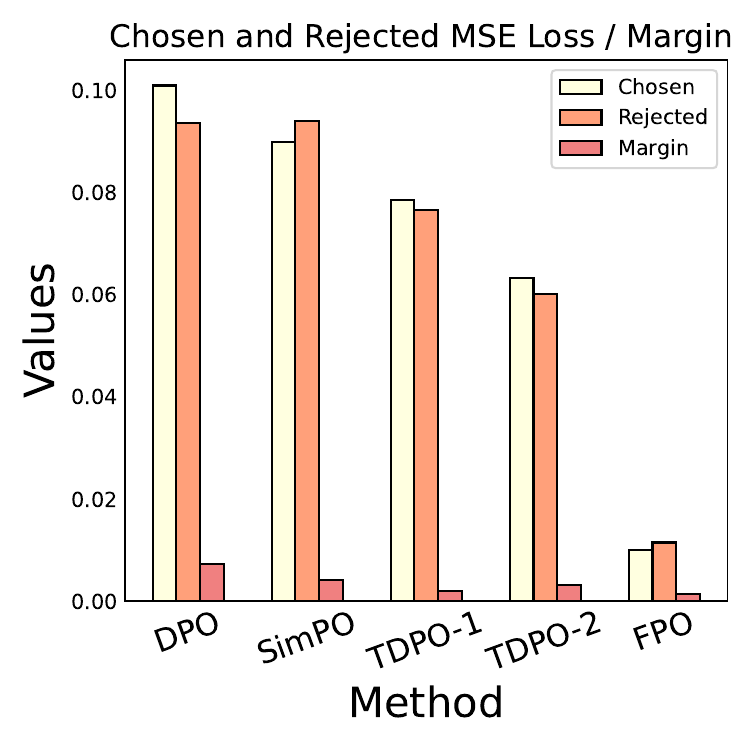}
    
    \caption{\textbf{Left.} GPU memory consumption on a single H100 with all methods. We average the average GPU memory in 1,000 steps at the beginning of the training. 
    \textbf{Right.} Feature-level MSE Loss of all methods after the whole alignment process. Here margin is defined as $|D^\ell_{\text{FPO}} \left( x, y_l; \pi_{\text{ref}} \| \pi_{\theta} \right) - \beta D^\ell_{\text{FPO}} \left( x, y_w; \pi_{\text{ref}} \| \pi_{\theta} \right)|$. The close correspondence between the MSE Loss margin reduction and KL divergence margin reduction supports the validity of our approach.
    }
    \label{fig:temperature}
\end{figure}
\vspace{-1em}

\paragraph{Consistency between MSE Loss and KL Divergence.}  In TDPO and KTO, the use of KL divergence serves to constrain the margin between the model's preferred response (chosen) and dispreferred response (rejected), thereby allowing for better control over the dispreferred responses. We also evaluated the margin between chosen and rejected responses under MSE Loss across 32 response sets (see Figure~\ref{fig:temperature}). The results indicate a high degree of consistency between the constraints enforced by MSE Loss and those enforced by KL divergence (see Figure~\ref{fig:kl} and Figure~\ref{fig:temperature}). Through these constraints, the model reduces the deviation in the distribution of dispreferred responses.

\subsection{Ablation Study} To validate the insertion position of the SAE encoder and the settings of other hyperparameters, we conduct an ablation study as shown in Table \ref{tab:ablation}. We train Gemma-2-2B on UltraFeedback for one epoch to evaluate the performance of different configurations. In terms of metrics, we focus on accuracy and diversity (measured by entropy) to balance alignment and diversity. Regarding the insertion position of the SAE encoder, we test the following: 
(1) Inserting at different layers, including shallow, middle, and deep layers. 
(2) Inserting the encoder after the residual stream, i.e., immediately after the residual connection to extract features, versus inserting it after the output of the MLP layer. We did not test the insertion after the attention output, as SAE is designed to capture more polysemous features in the MLP layer and the final residual output. Prior work supports this design.
(3) Varying the value of $\alpha$, which affects the strength of the constraint.
(4) The use of the stop-gradient operator. From Table \ref{tab:ablation}, we show that inserting the encoder closer to the final output leads to better performance. We hypothesize that this is because the layers near the final output have a more significant impact on the final result. If the encoder is inserted too early, the later layers do not receive gradients from the MSE loss, which negatively affects the model's performance. Regarding the choice of $\alpha$, we find that although a larger $\alpha$ yields stronger constraint effects while also limits the model's alignment performance. Therefore, we select $0.5$ as the optimal $\alpha$. Our tests on the stop-gradient operator demonstrate its effectiveness, which is consistent with TDPO.

\paragraph{Varying Sampling Temperatures.}
To investigate the performance variation of FPO under different sampling temperatures, we designed a set of temperature comparison experiments based on the ArenaHard dataset. We configured five different softmax sampling temperatures: $0.2$, $0.4$, $0.6$, $0.8$, and $1.0$. Then, for each of these temperature settings, we sampled responses from all tested methods across the first 100 questions of the ArenaHard dataset. We compared FPO's sampling results with those of other methods, using GPT-4 Turbo as the judge, and calculated a winning rate based on the win-loss results for each comparison. A winning rate greater than $50\%$ indicates that FPO achieved better alignment. As shown in Figure \ref{fig:temperature}, the results show that, across multiple temperature settings, FPO outperforms other methods in at least 3-4 temperature conditions.

\subsection{FPO Achieves Accurate Control Over Model Capabilities}

A key advantage of FPO lies in its ability to precisely control model capabilities. While efficiency and reduced memory usage are significant outcomes, the accurately in controlling model behavior stems from FPO's underlying mechanism.

\begin{table*}[t]
\centering
\small
\caption{FPO Accurate Control Experiment Highlights}
\label{tab:accurate_control}
    \begin{tabular}{@{}lllll@{}}
    \toprule
    Domain & Capability & Method/Setting & Metric & Value \\ \midrule
    \multirow{3}{*}{Instruction Following} & \multirow{3}{*}{JSON Format} & FPO ($\beta=0$ on all format features) & \multirow{3}{*}{Accuracy} & 0.46 / 0.00 \\
     & & FPO ($\beta=1$ on all format features) & & 0.03 / 0.00 \\
     & & FPO ($\beta=1$ on JSON feature) & & 0.04 / 0.00 \\
    \midrule
    \multirow{2}{*}{Multilingual} & \multirow{2}{*}{French} & FPO ($\beta=0$ on French feature) & \multirow{2}{*}{Output Rate \%} & 80 \\
     & & FPO ($\beta=1$ on French feature) & & 3 \\
    \midrule
    \multirow{3}{*}{Safety} & \multirow{3}{*}{-} & FPO ($\beta=0$) & \multirow{3}{*}{Attack Success Rate \%} & 30 \\
     & & FPO ($\beta=1$ on safety-related features) & & 80 \\
     & & FPO ($\beta=1$ on harmful features) & & 5 \\
    \midrule
    \multirow{2}{*}{Sentiment} & \multirow{2}{*}{Positive} & FPO ($\beta=0$) & \multirow{2}{*}{Positive Sentiment Ratio} & 65 \\
     & & FPO ($\beta=1$ on Positive feature) & & 5 \\
    \bottomrule
    \end{tabular}
\end{table*}

To substantiate this, experiments were conducted to accurately regulate specific capabilities during alignment, while leaving others unaffected and maintaining strong overall performance. These experiments assessed four critical domains: (1) \textbf{Instruction Following:} Evaluated using IFEval, focusing on tasks such as JSON formatting, capitalizing, highlighting text, lowercasing, creating bullet lists, and using quotations. (2) \textbf{Multilingual Capability:} Assessed with MultiAlpaca and WildChat datasets, covering French, Canadian French, German, and Italian, with English questions from MKQA. (3) \textbf{Safety:} Measured using Jailbreak Bench and AdvBench. (4) \textbf{Sentiment:} Analyzed with the Twitter Financial News Sentiment dataset.

The experimental setup involved the Gemma2-2B model with an SAE width of 16k. Features relevant to each domain were identified by first selecting the top 50 features most aligned with each target domain based on single-token activations, followed by validating their global relevance through average activations across target datasets. The model was trained with a learning rate of 2e-5, a batch size of 64, and a maximum sequence length of 2048, using an Adam optimizer and a cosine learning rate schedule with 10\% warmup steps over 1 epoch. Hyperparameters such as $\alpha$ for TDPO2 and FPO were set to 0.5, $\beta$ for SimPO to 2, and $\beta$ for other methods to 0.1.

The results demonstrate FPO's capacity for targeted control. For example, in Instruction Following, by adjusting the $\beta$ value for all format-related features to 1, FPO significantly reduced the model's propensity to use these formats (e.g., JSON format accuracy dropped to 0.03 with instruction and 0.00 without, compared to 0.46 and 0.00 respectively when $\beta=0$). Conversely, setting $\beta=0$ for these features (effectively removing the constraint) maintained or enhanced these abilities (e.g., JSON format accuracy of 0.46 with instruction). When targeting a specific feature like JSON formatting by setting its $\beta=1$, the JSON format accuracy dropped to 0.04 (with instruction), while other formatting abilities like capitalizing (0.73) or highlighting (0.55) remained high.

Similar precise control was observed in other domains. For Multilingual Capability, setting $\beta=1$ for French-related features drastically reduced French output (3\% output rate) while other languages like German (45\%) and Italian (50\%) were less affected compared to when $\beta=0$ for French features (French output rate of 80\%). In Safety, applying $\beta=1$ to safety-related features increased the attack success rate to 80\% (indicating reduced safety), whereas applying it to harmful features decreased the attack success rate to 5\% (indicating enhanced safety), compared to a 30\% success rate when $\beta=0$. For Sentiment, setting $\beta=1$ on positive sentiment features reduced positive sentiment expression to 5\% and increased negative sentiment to 95\%. Conversely, targeting negative sentiment features with $\beta=1$ resulted in 93\% positive and 7\% negative sentiment outputs.

\section{Conclusion}
In conclusion, we proposed FPO, a novel method for efficient and stable alignment of large language models using feature-level constraints. By leveraging sparse autoencoders and pre-computed offline references, FPO reduced the computational overhead traditionally associated with alignment methods like DPO and TDPO. Our experimental results demonstrate that FPO achieved significant improvements in alignment accuracy and diversity while maintaining low resource consumption. 

\section*{Impact Statement}
This paper proposes Feature-level Constrained Direct Preference Optimization (FPO) to address the challenges of computational inefficiency and training instability in aligning large language models (LLMs) with human preferences. Its impact spans multiple aspects. In academic research, FPO offers a novel perspective by being the first to integrate sparse feature-level constraints into LLM alignment, inspiring further exploration in this area.  Methodologically, it enriches the field by using Sparse Autoencoders (SAEs) to approximate KL divergence, balancing efficiency and controllability. On the industry front, FPO promotes the safe and reliable application of LLMs in critical sectors, such as healthcare, finance, and education. Its use of open-source models and datasets fosters the development of the open-source ecosystem, encouraging collaboration and innovation among researchers.

\section*{Acknowledgement}
This publication has been supported by the National Natural Science Foundation of China (NSFC) Key Project under Grant Number 62336006.

\bibliography{example_paper}
\bibliographystyle{icml2025}


\newpage
\appendix
\onecolumn
\section{Training Settings}

\begin{table}[h]
    \centering
    \small
    \begin{tabular}{l|ccccll}
    \toprule
         Model Name& \multicolumn{6}{c}{Gemma-2-2b}\\
 Parameters& \multicolumn{6}{c}{2B}\\
         \midrule
         Method&   SFT& DPO&TDPO-1&TDPO-2 &SimPO&\ourmethod\\
          \midrule
         $\alpha$&   -& -& 0.5&& -&0.5\\
         $\beta$&   -& 0.1& 0.1&0.1& 2&0.1\\
         $\gamma$&   -& -& -&-& 0.5&-\\
         learning rate&   $5 \times 10^{-7}$& $5 \times 10^{-7}$& $5 \times 10^{-7}$&$5 \times 10^{-7}$& $5 \times 10^{-7}$&$5 \times 10^{-7}$\\
         optimizer& Adam& Adam& Adam&Adam& Adam&Adam\\
         warmup steps&   150& 150& 150&150& 150&150\\
         activation checkpoint&   True& True& True&True& True&True\\
 SAE width& None& None& None&None& None&16k\\
         GPU(s)&\multicolumn{6}{c}{4 * H100}\\
        \midrule 
        \midrule
                  Model Name& \multicolumn{6}{c}{Gemma-2-9b}\\
 Parameters& \multicolumn{6}{c}{9B}\\
         \midrule
         Method&   SFT& DPO&TDPO-1&TDPO-2 &SimPO&\ourmethod\\
          \midrule
         $\alpha$&   -& -& 0.5&& -&0.5\\
         $\beta$&   -& 0.1& 0.1&0.1& 2&0.1\\
         $\gamma$&   -& -& -&-& 0.5&-\\
         learning rate&   $5 \times 10^{-7}$& $5 \times 10^{-7}$& $5 \times 10^{-7}$&$5 \times 10^{-7}$& $5 \times 10^{-7}$&$5 \times 10^{-7}$\\
         optimizer& RMSprop& RMSprop& RMSprop&RMSprop& RMSprop&RMSprop\\
         warmup steps&   150& 150& 150&150& 150&150\\
         activation checkpoint&   True& True& True&True& True&True\\
 SAE width& None& None& None&None& None&16k\\
         GPU(s)&\multicolumn{6}{c}{4 * H100}\\
    \bottomrule
    \end{tabular}
    \caption{Hyperparameters for Gemma-2-2b and Gemma-2-9b.}
\end{table}

\section{Bounding KL Divergence with MSE of Sparse Activation}
\begin{theorem}
Let $\pi_\theta$ and $\pi_\mathrm{ref}$ be two models with final layer outputs $h^{t,L}_\theta, h^{t,L}_\mathrm{ref} \in \mathbb{R}^d$ at position $t$. Let $c^{t,L}_\theta, c^{t,L}_\mathrm{ref} \in \mathbb{R}^m$ be their respective sparse activation generated by a SAE. Under certain conditions, minimizing the MSE between these sparse activation values leads to a reduction in the upper bound of the KL divergence between their token probability distributions.
\end{theorem}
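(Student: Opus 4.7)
The plan is to propagate an MSE bound on sparse codes forward through the remaining layers of the model, picking up an upper bound on the KL divergence along the way:
\[\|c^{t,L}_\theta - c^{t,L}_\mathrm{ref}\|_2^2 \;\rightsquigarrow\; \|h^{t,L}_\theta - h^{t,L}_\mathrm{ref}\|_2^2 \;\rightsquigarrow\; \|z_\theta - z_\mathrm{ref}\|_2^2 \;\rightsquigarrow\; D_\mathrm{KL}(\pi_\theta\,\|\,\pi_\mathrm{ref}),\]
where $z = W_U h^{t,L}$ are the logits produced by the unembedding $W_U$. The ``certain conditions'' in the statement I would make precise as (a) the SAE reconstruction residual $\|h - W_\mathrm{dec}^\top c\|_2$ is bounded by some $\varepsilon_\mathrm{SAE}$ for both models, which is exactly what the SAE loss \cref{eq:sae_loss} controls, and (b) the operator norms $\|W_\mathrm{dec}\|_\mathrm{op} \le B_D$ and $\|W_U\|_\mathrm{op}\le B_U$ are finite, which is trivial for a trained model.

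For the first two arrows I would use purely linear-algebraic bounds. Writing the SAE decomposition $h^{t,L} = W_\mathrm{dec}^\top c^{t,L} + r$ and combining a triangle inequality with submultiplicativity yields $\|h_\theta - h_\mathrm{ref}\|_2 \le B_D\,\|c_\theta - c_\mathrm{ref}\|_2 + 2\varepsilon_\mathrm{SAE}$, after which the unembedding contributes a further factor $\|z_\theta - z_\mathrm{ref}\|_2 \le B_U\,\|h_\theta - h_\mathrm{ref}\|_2$. These steps are routine; their only delicate aspect is that the two SAE residuals must be controlled simultaneously, which is precisely condition (a).

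The main obstacle is the softmax step. Here the plan is to establish
\[D_\mathrm{KL}(\mathrm{softmax}(z)\,\|\,\mathrm{softmax}(z')) \le \tfrac{1}{2}\,\|z - z'\|_2^2\]
via smoothness of the log-partition $f(z)=\log\sum_i e^{z_i}$: its Hessian $\mathrm{diag}(p) - p p^\top$ is positive semidefinite with operator norm at most $1$, so $f$ is $1$-smooth in $\ell_2$. Substituting the standard smoothness inequality into the identity $D_\mathrm{KL}(p\|q) = \langle p, z-z'\rangle - (f(z) - f(z'))$ cancels the linear term and leaves the desired quadratic bound. I expect this to be the subtle step: a Pinsker-type argument would only yield $O(\|z-z'\|)$ dependence, too weak to translate a vanishing MSE into a vanishing KL, so the log-sum-exp smoothness argument carries the real content of the theorem.

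Chaining the three arrows gives $D_\mathrm{KL}(\pi_\theta\,\|\,\pi_\mathrm{ref}) \le \tfrac{1}{2} B_U^2 B_D^2\,\|c^{t,L}_\theta - c^{t,L}_\mathrm{ref}\|_2^2 + O(\varepsilon_\mathrm{SAE})$. To close the remaining gap between the full $\ell_2$ distance and the top-$k$ pooled MSE actually minimized by FPO, the plan is to invoke sparsity: the difference of two codes with supports of size at most $k$ is itself supported on a set of size at most $2k$ contained in the union of top-$k$ indices $I_k$ from \cref{eq:fea_seq_d}, so $\|c_\theta - c_\mathrm{ref}\|_2^2 \le 2k\cdot D^\ell_\mathrm{FPO}(x,y;\pi_\mathrm{ref}\|\pi_\theta)$. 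Hence driving the FPO objective down shrinks the upper bound on the per-token KL, and summing over positions $t$ recovers a sequential bound analogous to the TDPO penalty, which is the substantive claim of the theorem.
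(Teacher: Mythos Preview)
Your proposal is correct and follows the same architecture as the paper: sparse-code MSE $\to$ hidden-state distance $\to$ logit distance $\to$ KL, with your conditions (a) and (b) matching the paper's Conditions~1 and~2 (accurate SAE reconstruction and a bounded operator norm for $K=W_\mathrm{out}^\top W_\mathrm{dec}^\top$).

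The one substantive difference is the softmax step. The paper imposes an additional Condition~3 (``small logit differences'') and then argues by second-order Taylor expansion, $D_\mathrm{KL}(p_\mathrm{ref}\|p_\theta)\approx\tfrac12(\Delta z)^\top H\,\Delta z\le\tfrac12\|\Delta z\|_2^2$, treating the bound as valid only near $\Delta z=0$. Your argument via global $1$-smoothness of the log-partition function yields the same inequality \emph{exactly}, for arbitrary $z,z'$, so you drop Condition~3 at no cost; this is a strict improvement. Your explicit tracking of the reconstruction residual as an additive $O(\varepsilon_\mathrm{SAE})$ floor is also more careful than the paper's bare $\Delta h\approx W_\mathrm{dec}^\top\Delta c$. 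Two minor notes: you bound $D_\mathrm{KL}(\pi_\theta\|\pi_\mathrm{ref})$ whereas the paper (following TDPO) states the forward direction $D_\mathrm{KL}(\pi_\mathrm{ref}\|\pi_\theta)$, but your smoothness inequality is symmetric in $z,z'$ and covers both; and your final paragraph connecting $\|c_\theta-c_\mathrm{ref}\|_2^2$ to the pooled top-$k$ objective $D^\ell_\mathrm{FPO}$ goes beyond what the paper actually proves (it stops at the per-token $\|\Delta c^{t,L}\|_2^2$), so that step is an addition rather than a gap, though it tacitly assumes the SAE codes are hard-$k$-sparse rather than merely $\ell_1$-regularized.
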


We begin by establishing key definitions and conditions:

\begin{definition}[Sparse Activations]
\begin{align}
c^{t,L} = \operatorname{ReLU}(W_\mathrm{enc} h^{t,L} + b)
\end{align}
\end{definition}

\begin{definition}[Token Logits and Probabilities]
\begin{align}
z^t = W_\mathrm{out}^T h^{t,L},\quad  p^t_\theta = \operatorname{softmax}(z^t)
\end{align}
\end{definition}

\begin{definition}[KL Divergence]
\begin{equation}
D_\mathrm{KL}(p^t_\mathrm{ref} \| p^t_\theta) = \sum_{i=1}^V p^t_\mathrm{ref}(i) \log \frac{p^t_\mathrm{ref}(i)}{p^t_\theta(i)}
\end{equation}
\end{definition}

\begin{condition}[Accurate Reconstruction] The SAE reconstructs hidden representations accurately, i.e., for some small $\epsilon > 0$:
\begin{align}
\|W_\mathrm{dec}^T c^{t,L} - h^{t,L}\|_2 < \epsilon
\end{align}
\end{condition}

\begin{condition}[Bounded Operator Norm]
\begin{equation}
\|K\|_2 \leq M \text{ for } K = W_\mathrm{out}^T W_\mathrm{dec}^T \text{ and some } M > 0
\end{equation}
\end{condition}

\begin{condition}[Small Logit Differences]
The difference in logits $\Delta z^t = z^t_\theta - z^t_\mathrm{ref}$ is small enough for the quadratic approximation of the KL divergence to hold.
\end{condition}
A small $\Delta z^t$ generally exists since (1) $\Delta z^t=0$ initially, and (2) a very small learning rate (e.g., 5e-7) is usually adopted during alignment training.

Now, we proceed with the main proof:

\begin{lemma}
Under Condition 1, the difference in hidden representations $\Delta h^{t,L} = h^{t,L}\theta - h^{t,L}\mathrm{ref}$ can be approximated by:
\begin{equation}
\Delta h^{t,L} = h^{t,L}_\theta - h^{t,L}_\mathrm{ref} \approx W_\mathrm{dec}^T \Delta c^{t,L}
\end{equation}
where $\Delta c^{t,L} = c^{t,L}_\theta - c^{t,L}_\mathrm{ref}$.
\end{lemma}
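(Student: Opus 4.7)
The plan is to derive the approximation by applying the accurate-reconstruction condition to each model separately and then subtracting. First I would write Condition 1 for the aligning model and the reference model respectively, obtaining
\begin{equation*}
h^{t,L}_\theta = W_\mathrm{dec}^T c^{t,L}_\theta + e_\theta, \qquad h^{t,L}_\mathrm{ref} = W_\mathrm{dec}^T c^{t,L}_\mathrm{ref} + e_\mathrm{ref},
\end{equation*}
where the reconstruction residuals satisfy $\|e_\theta\|_2 < \epsilon$ and $\|e_\mathrm{ref}\|_2 < \epsilon$. Here I am implicitly assuming that the same SAE (trained on a shared representation space) accurately reconstructs both models' hidden states, which is reasonable because Gemma-scope SAEs are trained on the base model from which both $\pi_\theta$ and $\pi_\mathrm{ref}$ are derived, and $\pi_\theta$ drifts only slightly during alignment.

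Next I would subtract the two equations to obtain
\begin{equation*}
\Delta h^{t,L} = h^{t,L}_\theta - h^{t,L}_\mathrm{ref} = W_\mathrm{dec}^T (c^{t,L}_\theta - c^{t,L}_\mathrm{ref}) + (e_\theta - e_\mathrm{ref}) = W_\mathrm{dec}^T \Delta c^{t,L} + \Delta e,
\end{equation*}
with $\Delta e = e_\theta - e_\mathrm{ref}$. Applying the triangle inequality yields $\|\Delta e\|_2 \leq \|e_\theta\|_2 + \|e_\mathrm{ref}\|_2 < 2\epsilon$, so the residual term is controlled uniformly by the SAE reconstruction accuracy. This justifies dropping $\Delta e$ and writing $\Delta h^{t,L} \approx W_\mathrm{dec}^T \Delta c^{t,L}$, completing the lemma.

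The main obstacle, such as it is, lies not in the algebra but in justifying that the single $\epsilon$ from Condition 1 applies to \emph{both} models. The cleanest way to handle this is to note that during alignment the parameter shift is small (consistent with the small learning rate and the appeal to Condition 3 later), so if the SAE reconstructs $\pi_\mathrm{ref}$ to within $\epsilon$, it continues to reconstruct $\pi_\theta$ to within a comparable $\epsilon$ throughout training; any looseness can be absorbed into the constant and propagated into the downstream bound on the KL divergence via Condition 2. Once this is accepted, the lemma follows directly, and the factor $W_\mathrm{dec}^T$ is exactly the linear map that will be composed with $W_\mathrm{out}^T$ to produce the operator $K$ appearing in Condition 2 in the subsequent steps of the theorem.
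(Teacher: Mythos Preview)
Your proof is correct and is in fact more explicit than the paper's own treatment: the paper states this lemma without proof, evidently regarding it as an immediate consequence of Condition~1. Your argument---apply Condition~1 to $\pi_\theta$ and $\pi_\mathrm{ref}$ separately, subtract, and bound the residual $\Delta e$ by $2\epsilon$ via the triangle inequality---is exactly the implicit reasoning behind the claim, and your remark about why the same $\epsilon$ should apply to both models is a thoughtful addition that the paper does not address.
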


\begin{lemma} The difference in logits $\Delta z^t$ is related to the difference in sparse activations $\Delta c^{t,L}$ by:
\begin{equation}
\Delta z^t = K \Delta c^{t,L} \text{ where } K = W_\mathrm{out}^T W_\mathrm{dec}^T
\end{equation}
\end{lemma}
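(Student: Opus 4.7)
The plan is to establish Lemma 2 as a direct, two-step consequence of the definition of the logits together with Lemma 1, treating the claimed identity as holding up to the reconstruction error $\epsilon$ of Condition~1. The whole argument is essentially one chain of linear substitutions, so the proposal below is short and the only subtlety is bookkeeping on approximation.

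First I would start from the logit definition $z^t = W_\mathrm{out}^T h^{t,L}$ applied to both models. Since this map is linear in $h^{t,L}$, subtracting gives
\begin{equation*}
\Delta z^t \;=\; z^t_\theta - z^t_\mathrm{ref} \;=\; W_\mathrm{out}^T\bigl(h^{t,L}_\theta - h^{t,L}_\mathrm{ref}\bigr) \;=\; W_\mathrm{out}^T \Delta h^{t,L}.
\end{equation*}
This step requires nothing beyond the statement of Definition~2 and does not involve the SAE.

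Next I would invoke Lemma 1, which states $\Delta h^{t,L} \approx W_\mathrm{dec}^T \Delta c^{t,L}$ under Condition~1. Substituting this into the previous display yields
\begin{equation*}
\Delta z^t \;\approx\; W_\mathrm{out}^T W_\mathrm{dec}^T \Delta c^{t,L} \;=\; K\,\Delta c^{t,L},
\end{equation*}
which is the claim. Identifying $K = W_\mathrm{out}^T W_\mathrm{dec}^T$ is then by definition.

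The main obstacle, and the only nontrivial point in the argument, is that the middle equality is an approximation rather than an exact identity: it inherits the reconstruction error from Condition~1. To make the statement precise I would quantify the residual by writing $\Delta h^{t,L} = W_\mathrm{dec}^T \Delta c^{t,L} + r$ with $\|r\|_2 < 2\epsilon$ (triangle inequality on the two reconstruction errors for $\pi_\theta$ and $\pi_\mathrm{ref}$), so that
\begin{equation*}
\bigl\|\Delta z^t - K\,\Delta c^{t,L}\bigr\|_2 \;\le\; \|W_\mathrm{out}^T\|_2 \cdot \|r\|_2 \;\le\; 2\epsilon\,\|W_\mathrm{out}^T\|_2,
\end{equation*}
which vanishes as $\epsilon \to 0$. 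This error term can be absorbed into $M$ via Condition~2 when feeding Lemma~2 into the KL-divergence bound in the Theorem, so the approximate equality as stated is sufficient for downstream use.
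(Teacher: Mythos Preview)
Your proposal is correct and matches the paper's intended argument: the paper states this lemma without an explicit proof, relying on the same chain of linear substitutions you give---take differences in the logit definition $z^t = W_\mathrm{out}^T h^{t,L}$ and then substitute Lemma~1. Your additional bookkeeping on the residual $r$ and the bound $\|\Delta z^t - K\Delta c^{t,L}\|_2 \le 2\epsilon\|W_\mathrm{out}^T\|_2$ in fact goes beyond what the paper makes explicit, and correctly identifies that the stated equality is only approximate up to the reconstruction error of Condition~1.
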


\begin{lemma}
For small $\Delta z^t$, the KL divergence can be bounded by:
\begin{equation}
D_\mathrm{KL}(p^t_\mathrm{ref} \| p^t_\theta) \leq \frac{1}{2} \|\Delta z^t\|_2^2
\end{equation}
\end{lemma}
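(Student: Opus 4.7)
The plan is to view the KL divergence as a smooth function of the logit difference and apply a second-order Taylor expansion around zero. Concretely, fix $z^t_\mathrm{ref}$ and define $F(u) \coloneq D_\mathrm{KL}(p^t_\mathrm{ref} \,\|\, \operatorname{softmax}(z^t_\mathrm{ref} + u))$, so that $F(0) = 0$ and $F(\Delta z^t)$ is the quantity we need to bound. Writing out $F$ using $\log p(i) = z_i - \log Z(z)$ gives
\begin{equation*}
F(u) = -p^t_\mathrm{ref}{}^{\!\top} u + \log Z(z^t_\mathrm{ref} + u) - \log Z(z^t_\mathrm{ref}) - H(p^t_\mathrm{ref}) - p^t_\mathrm{ref}{}^{\!\top} z^t_\mathrm{ref}.
\end{equation*}
All the dependence on $u$ is therefore channeled through the log-partition function.

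The key classical identities I would invoke are that $\log Z(z)$ is the cumulant generating function of the categorical distribution: $\nabla \log Z(z) = \operatorname{softmax}(z)$ and $\nabla^2 \log Z(z) = \operatorname{diag}(p(z)) - p(z) p(z)^{\top}$, the covariance matrix under the induced categorical law. From these I obtain $\nabla F(0) = -p^t_\mathrm{ref} + p^t_\mathrm{ref} = 0$ and $\nabla^2 F(u) = \operatorname{diag}(p(z^t_\mathrm{ref}+u)) - p(z^t_\mathrm{ref}+u) p(z^t_\mathrm{ref}+u)^{\top}$. Applying Taylor's theorem with the integral (or Lagrange) remainder at $u = \Delta z^t$ then yields the exact representation
\begin{equation*}
F(\Delta z^t) = \tfrac{1}{2}\, (\Delta z^t)^{\top} \bigl[\operatorname{diag}(\tilde p) - \tilde p\, \tilde p^{\top}\bigr] (\Delta z^t),
\end{equation*}
for some intermediate softmax vector $\tilde p$ on the segment between $p^t_\mathrm{ref}$ and $p^t_\theta$.

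From here the bound follows by a two-line estimate: the term $(\tilde p^{\top} \Delta z^t)^2$ is nonnegative and can be dropped, leaving $\sum_i \tilde p_i (\Delta z^t_i)^2$; since $\tilde p_i \in [0,1]$ componentwise, this is at most $\sum_i (\Delta z^t_i)^2 = \|\Delta z^t\|_2^2$. Combining gives $D_\mathrm{KL}(p^t_\mathrm{ref} \,\|\, p^t_\theta) \le \tfrac{1}{2}\|\Delta z^t\|_2^2$.

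I do not expect any real obstacle; the whole argument is standard information-geometry bookkeeping once one recognizes the Hessian of $\log Z$. The most delicate point is a conceptual one: the lemma is phrased under a ``small $\Delta z^t$'' hypothesis, but the remainder form of Taylor's theorem actually makes this hypothesis unnecessary, because the intermediate Hessian $\operatorname{diag}(\tilde p) - \tilde p\,\tilde p^{\top}$ is bounded in operator norm by $1$ for every $\tilde p$. I would therefore present the argument in the global form and note explicitly that Condition~3 is used only to justify the approximation $\tilde p \approx p^t_\mathrm{ref}$ in subsequent propagation steps, not for the inequality itself.
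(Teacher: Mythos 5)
Your proposal is correct and rests on the same core identity as the paper's proof: the Hessian of the KL divergence with respect to the logits is the softmax covariance $\operatorname{diag}(p)-pp^{\top}$, whose operator norm is at most $1$. The difference is in execution, and it is a genuine strengthening. The paper stops at a second-order Taylor \emph{approximation} ($D_\mathrm{KL}\approx\tfrac12(\Delta z^t)^{\top}H(z^t_\mathrm{ref})\Delta z^t$) and invokes Condition~3 to wave away the remainder, so what it actually establishes is an approximate inequality valid only for small $\Delta z^t$; it also asserts $\lambda_{\max}(H)=1$, which is an overstatement (the correct claim is $\lambda_{\max}\le\max_i p_i\le 1$, which is all that is needed). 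By writing the KL divergence through the log-partition function and using the Lagrange (or integral) remainder, you obtain the exact representation $F(\Delta z^t)=\tfrac12(\Delta z^t)^{\top}[\operatorname{diag}(\tilde p)-\tilde p\,\tilde p^{\top}]\Delta z^t$ and hence a bound that holds globally, correctly relegating Condition~3 to the later propagation steps rather than to this lemma. Two trivial points to fix before writing it up: your displayed formula for $F(u)$ carries a spurious $-\log Z(z^t_\mathrm{ref})$ term (the constants must combine so that $F(0)=0$; this does not affect the gradient or Hessian), and the intermediate point $\tilde p$ is the softmax of a point on the segment between $z^t_\mathrm{ref}$ and $z^t_\theta$ in logit space, not a point on the segment between $p^t_\mathrm{ref}$ and $p^t_\theta$ in probability space --- harmless here, since all you use is that $\tilde p$ is a probability vector.
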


\begin{proof}
Using a second-order Taylor expansion and noting that the maximum eigenvalue of the Hessian of KL divergence concerning logits is $\lambda_\mathrm{max}(H) = 1$:
\begin{align}
D_\mathrm{KL}(p^t_\mathrm{ref} \| p^t_\theta) &\approx \frac{1}{2} (\Delta z^t)^T H(z^t_\mathrm{ref}) \Delta z^t \\
&\leq \frac{1}{2} \lambda_\mathrm{max}(H) \|\Delta z^t\|_2^2 \\
&\leq \frac{1}{2} \|\Delta z^t\|_2^2
\end{align}
\end{proof}

Combining these lemmas:
\begin{align}
D_\mathrm{KL}(p^t_\mathrm{ref} \| p^t_\theta) &\leq \frac{1}{2} \|\Delta z^t\|_2^2 \\
&\leq \frac{1}{2} \|K \Delta c^{t,L}\|_2^2 \\
&\leq \frac{M^2}{2} \|\Delta c^{t,L}\|_2^2
\end{align}

The right-hand side is proportional to the MSE of the sparse activations:
\begin{equation}
\|\Delta c^{t,L}\|_2^2 = \sum_{i=1}^m (c^{t,L}_{\theta,i} - c^{t,L}_{\mathrm{ref},i})^2 = m \cdot \mathrm{MSE}(c^{t,L}_\theta, c^{t,L}_\mathrm{ref})
\end{equation}

Let $I_m$ be the set of indices corresponding to the top $m$ activations. Then:
\begin{align}
D_\mathrm{KL}(p^t_\mathrm{ref} \| p^t_\theta) &\leq \frac{M^2}{2} \sum_{i \in I_m} (c^{t,L}_{\theta,i} - c^{t,L}_{\mathrm{ref},i})^2 \\
&= \frac{M^2m}{2} \cdot \mathrm{MSE}(c^{t,L}_\theta, c^{t,L}_\mathrm{ref})
\end{align}

Therefore, minimizing the MSE of sparse activation leads to minimizing an upper bound on $D_\mathrm{KL}(p^t_\mathrm{ref} \| p^t_\theta)$.

\section{{Concrete Examples of Feature-Level Representations vs. Token-Level Embeddings}}
\label{sec:feature_vs_token}

{This section provides concrete examples and visualizations to highlight the differences between feature-level representations and token-level embeddings in our framework.}

\subsection{{Definitions and Intuitions}}
\paragraph{{Token-Level Embeddings:}}
{Token-level embeddings correspond directly to the token output probabilities (\textit{logits}) generated by a model. These embeddings are high-dimensional vectors representing each token in the model’s vocabulary. For a sequence \( x = [x_1, x_2, \dots, x_T] \), the token-level embeddings at position \( t \) are computed as:}
\[
h_t = f_{\text{token}}(x_t) \in \mathbb{R}^V,
\]
{where \( V \) is the vocabulary size, and \( f_{\text{token}} \) is the output projection from the model’s hidden state.}

\paragraph{{Feature-Level Representations:}}
{Feature-level representations, on the other hand, are high-level abstractions derived from the model’s intermediate layers. These representations capture patterns and salient features across sequences. Using a Sparse Autoencoder (SAE), the hidden state \( h_t^\ell \) at layer \( \ell \) can be transformed into sparse activations \( c_t^\ell \), defined as:}
\[
c_t^\ell = \text{ReLU}(W_{\text{enc}} h_t^\ell + b),
\]
{where \( W_{\text{enc}} \in \mathbb{R}^{m \times d} \), \( b \in \mathbb{R}^m \), and \( m \ll V \). This sparse activation ensures only a subset of features is active, making the representation interpretable and efficient.}

\subsection{{Concrete Example: A Mathematical Query}}
{Consider the input query:}
\begin{center}
\textit{{"What is the derivative of \( x^2 + 3x + 5 \)?"}}
\end{center}

\paragraph{{Token-Level Embedding:}}
{The token-level output probabilities for each token in the response sequence, such as \textit{"The derivative is 2x + 3."}, involve logits for every token:}
\[
\text{logits} = [\log P(\text{'The'}), \log P(\text{'derivative'}), \log P(\text{'is'}), \dots].
\]

\paragraph{{Feature-Level Representation:}}
{Using SAE on the 25th layer, the sparse feature representation for the same sequence might activate specific features corresponding to mathematical operations or semantic groupings:}
\[
c^\ell = [\text{activation}_1 (\text{Polynomial}), \text{activation}_2 (\text{Arithmetic}), \dots].
\]

\section{{Experiments on Additional Baselines and Ablation Studies}}
\label{sec:additional_experiments}

{In response to reviewer feedback, we conducted additional experiments to address their concerns and validate our methodology. These include comparisons with the SimPO+KL baseline and ablations on multi-layer sparse autoencoders (SAEs).}

\subsection{{Comparison with SimPO+KL}}
\label{subsec:simpo_kl}
{This subsection provides a direct comparison of our method against SimPO+KL. We implemented SimPO+KL following the same experimental settings in Section~4. Specifically, we tested on the Gemma-2-2B model using the AlpacaEval-2 dataset, evaluating both winning rate (WR) and length-controlled winning rate (WR-L). Results are summarized in Table~\ref{tab:simpo_kl_comparison}.}

\begin{table}[h]
\small
    \centering
    \caption{{Comparison of FPO with SimPO+KL on the AlpacaEval-2 dataset. Metrics include Accuracy (\%), Diversity (Entropy), WR (\%), and WR-L (\%).}}
    \label{tab:simpo_kl_comparison}
    \begin{tabular}{lcccc}
    \toprule
    {Method} & {Accuracy (\%) $\uparrow$} & {Diversity (Entropy) $\uparrow$} & {WR (\%) $\uparrow$} & {WR-L (\%) $\uparrow$} \\
    \midrule
    {FPO (Ours)} & {64.1} & {1.68} & {\textbf{51.8}} & {\textbf{50.2}} \\
    {SimPO+KL} & {63.6} & {1.66} & {50.8} & {50.6} \\
    {SimPO} & {63.4} & {1.64} & {50.2} & {49.8} \\
    {TDPO-2} & {64.2} & {\textbf{1.68}} & {50.0} & {50.0} \\
    \bottomrule
    \end{tabular}
\end{table}

\paragraph{{Discussion:}}
{The results show that FPO achieves comparable or better performance than SimPO+KL in both WR and WR-L metrics. This highlights the effectiveness of feature-level constraints in maintaining both alignment quality and diversity, with a competitive computational cost.}

\subsection{{Ablation Study on Multi-Layer SAEs}}
\label{subsec:multi_layer_sae}
{To find out the effect of extending SAEs across multiple layers, we conducted experiments adding SAEs at different layer combinations. Table~\ref{tab:multi_layer_sae} presents the performance metrics when SAEs were applied to various combinations of shallow, middle, and deep layers.}

\begin{table}[h]
    \small
    \centering
    \caption{{Ablation study on using SAEs at multiple layers in FPO. Metrics include Accuracy (\%), Diversity (Entropy), WR (\%), and WR-L (\%).}}
    \label{tab:multi_layer_sae}
    \begin{tabular}{lcccc}
    \toprule
    {SAE Layers} & {Accuracy (\%) $\uparrow$} & {Diversity (Entropy) $\uparrow$} & {WR (\%) $\uparrow$} & {WR-L (\%) $\uparrow$} \\
    \midrule
    {Single Layer (Layer 25)} & {\textbf{64.1}} & {\textbf{1.68}} & {\textbf{51.8}} & {\textbf{50.2}} \\
    {Layers 0, 25} & {62.1} & {1.70} & {47.2} & {48.8} \\
    {Layers 12, 25} & {61.9} & {1.64} & {48.4} & {49.5} \\
    {Layers 24, 25} & {58.2} & {1.66} & {48.6} & {46.4} \\
    {Layers 0, 12, 25} & {51.4} & {1.66} & {47.8} & {49.8} \\
    \bottomrule
    \end{tabular}
\end{table}

\paragraph{{Discussion:}}
{Results indicate that adding multiple SAE layers does not consistently improve performance and may even degrade alignment metrics (e.g., accuracy and WR). The best results were achieved with a single SAE layer (Layer 25), confirming that simplicity in feature extraction leads to more stable alignment.}

\end{document}